\newcommand{\losscomparisonTable}{{
\centering
\captionof{table}{Comparison of the normalized truncation loss ($\downarrow$) between SVD-LLM and \sysname on two randomly selected weight matrices in LLaMA-3 8B using 256 calibration data on C4 under 20\% and 60\% compression ratios. \textcolor{red}{Fail} means the algorithm raises an error during the SVD compression.}
\resizebox{0.48\textwidth}{!}{%
\begin{tabular}{c|cc|cc}
\midrule
& \multicolumn{2}{c}{\textsc{Matrix A}}    & \multicolumn{2}{c}{\textsc{Matrix B}}         \\\midrule
\textsc{Ratio}     & 20\%    & 60\%      & 20\%    & 60\%     \\ \midrule
\color[HTML]{9B9B9B}Theoretical                 & \color[HTML]{9B9B9B}0.5982        & \color[HTML]{9B9B9B}2.3251           & \color[HTML]{9B9B9B}0.7351         & \color[HTML]{9B9B9B}3.5245 \\\midrule
SVD-LLM               & \textcolor{red}{Fail}         & \textcolor{red}{Fail}          & 0.8961         & 5.9834            \\\midrule
\sysname            &  \textbf{0.5982}         & \textbf{2.3251}        &\textbf{0.7351} (\textcolor{mygreen}{$\downarrow$18\%})             & \textbf{3.5245} (\textcolor{mygreen}{$\downarrow$41\%})          \\ \midrule
\end{tabular}
}
\label{tab:loss_comparison}
}}
\newcommand{\accuracyTable}{{
\begin{table*}[t]
\centering
\captionof{table}{Performance of OPT-6.7B, LLaMA-7B, and LLaMA-3 8B compressed by \sysname and baselines under 20\% compression ratio on two language modeling datasets (measured by perplexity  (\textcolor{mygreen}{$\downarrow$})), six classification datasets (measured by both individual and average accuracy (\textcolor{mygreen}{$\uparrow$})), two generation datasets (TruthfulQA measured by BLEU score (\textcolor{mygreen}{$\uparrow$}), and GSM8K measured by Exact Match Accuracy (\textcolor{mygreen}{$\uparrow$})). The best performance is marked in bold. The relative performance gain compared to the best-performing baseline is marked in green inside bracket.
}
\resizebox{1\textwidth}{!}{%}
\begin{tabular}{l|cccccccccccc}
\midrule
 &\multicolumn{1}{c|}{\textsc{Method}}    & WikiText-2\textcolor{mygreen}{$\downarrow$} & \multicolumn{1}{c|}{C4\textcolor{mygreen}{$\downarrow$}} & Openb. & ARC\_e & WinoG. & HellaS.  & PIQA & MathQA & \textbf{Average\textcolor{mygreen}{$\uparrow$}}  & TruthfulQA\textcolor{mygreen}{$\uparrow$} & GSM8K\textcolor{mygreen}{$\uparrow$}      \\ \midrule
 \multirow{5}{*}{\rotatebox[origin=c]{90}{\textbf{LLaMA-7B}}}& \multicolumn{1}{c|}{\color[HTML]{9B9B9B}Original}  & {\color[HTML]{9B9B9B}5.68}     & \multicolumn{1}{c|}{\color[HTML]{9B9B9B}7.34}      & {\color[HTML]{9B9B9B} 0.34} & {\color[HTML]{9B9B9B} 0.75} & {\color[HTML]{9B9B9B} 0.70} & {\color[HTML]{9B9B9B} 0.57} & {\color[HTML]{9B9B9B} 0.79} & {\color[HTML]{9B9B9B} 0.27} & {\color[HTML]{9B9B9B} 0.57}    & {\color[HTML]{9B9B9B}0.30}  & {\color[HTML]{9B9B9B}0.09}\\ \cmidrule{2-13} 
& \multicolumn{1}{c|}{FWSVD}     & 1727                     & \multicolumn{1}{c|}{1511}                      & 0.09  & 0.11  & 0.05  & 0.08  & 0.10  & 0.05  & 0.08  & 0.00   & 0.00\\
&\multicolumn{1}{c|}{ASVD}       & 11.14                   & \multicolumn{1}{c|}{15.93}                      & 0.29  & 0.53  & 0.64  & 0.41  & 0.68  & 0.17  & 0.45  & 0.21   & 0.04\\
&\multicolumn{1}{c|}{SVD-LLM}      & 7.94                   & \multicolumn{1}{c|}{15.84}                     & 0.31  & 0.71  & 0.68  & 0.49  & 0.71  & 0.22  & 0.52  & 0.24   & 0.06\\ \cmidrule{2-13} 
&\multicolumn{1}{c|}{\sysname}  & \textbf{7.12} (\textcolor{mygreen}{$\downarrow$10\%}) & \multicolumn{1}{c|}{\textbf{10.47} (\textcolor{mygreen}{$\downarrow$34\%})}        & \textbf{0.32}  & \textbf{0.72}  & \textbf{0.70}  & \textbf{0.52}  & \textbf{0.75}  & \textbf{0.24}  & \textbf{0.54} (\textcolor{mygreen}{$\uparrow$4\%})  & \textbf{0.27} (\textcolor{mygreen}{+0.03})   & \textbf{0.07} (\textcolor{mygreen}{+0.01})\\ \midrule
 \multirow{5}{*}{\rotatebox[origin=c]{90}{\textbf{OPT-6.7B}}}& \multicolumn{1}{c|}{\color[HTML]{9B9B9B}Original}  & {\color[HTML]{9B9B9B}10.87}     & \multicolumn{1}{c|}{\color[HTML]{9B9B9B}12.50}      & {\color[HTML]{9B9B9B} 0.28} & {\color[HTML]{9B9B9B} 0.66} & {\color[HTML]{9B9B9B} 0.65} & {\color[HTML]{9B9B9B} 0.50} & {\color[HTML]{9B9B9B} 0.76} & {\color[HTML]{9B9B9B} 0.25} & {\color[HTML]{9B9B9B} 0.52}    & {\color[HTML]{9B9B9B}0.29}  & {\color[HTML]{9B9B9B}0.01}\\ \cmidrule{2-13} 
&\multicolumn{1}{c|}{FWSVD}     & 14559                     & \multicolumn{1}{c|}{17898}                    & 0.03  & 0.08  & 0.02  & 0.01  & 0.05  & 0.01  & 0.03  & 0.01   & 0.00 \\
&\multicolumn{1}{c|}{ASVD}      & 82                   & \multicolumn{1}{c|}{102}                           & 0.16  & 0.41  & 0.30  & 0.36  & 0.61  & 0.07  & 0.32  & 0.09   & 0.00\\ 
&\multicolumn{1}{c|}{SVD-LLM}   & 16.04                   & \multicolumn{1}{c|}{21.27}                      & 0.21  & 0.56  & 0.59  & 0.47  & 0.73  & 0.21  & 0.46  & 0.22   & 0.00\\ \cmidrule{2-13} 
&\multicolumn{1}{c|}{\sysname}  & \textbf{13.46} (\textcolor{mygreen}{$\downarrow$16\%})  & \multicolumn{1}{c|}{\textbf{17.72} (\textcolor{mygreen}{$\downarrow$17\%})}       & \textbf{0.25}  & \textbf{0.61}  & \textbf{0.62} & \textbf{0.49}  & \textbf{0.74}  & \textbf{0.22}  & \textbf{0.49} (\textcolor{mygreen}{$\uparrow$7\%})  & \textbf{0.24} (\textcolor{mygreen}{+0.02})  & \textbf{0.01} (\textcolor{mygreen}{+0.01})\\ \midrule
\multirow{5}{*}{\rotatebox[origin=c]{90}{\textbf{LLaMA-3 8B}}}& \multicolumn{1}{c|}{\color[HTML]{9B9B9B}Original}  & {\color[HTML]{9B9B9B}6.14}     & \multicolumn{1}{c|}{\color[HTML]{9B9B9B}9.47}      & {\color[HTML]{9B9B9B} 0.35} & {\color[HTML]{9B9B9B} 0.80} & {\color[HTML]{9B9B9B} 0.73} & {\color[HTML]{9B9B9B} 0.60} & {\color[HTML]{9B9B9B} 0.80} & {\color[HTML]{9B9B9B} 0.40} & {\color[HTML]{9B9B9B} 0.61}    & {\color[HTML]{9B9B9B}0.49}  & {\color[HTML]{9B9B9B}0.45}\\ \cmidrule{2-13} 
&\multicolumn{1}{c|}{FWSVD}     & 4782                     & \multicolumn{1}{c|}{8195}                    & 0.01  & 0.04  & 0.01  & 0.02  & 0.02  & 0.01  & 0.02  & 0.00   & 0.00\\
&\multicolumn{1}{c|}{ASVD}      & 17.55                  & \multicolumn{1}{c|}{28.41}                     & 0.20  & 0.59  & 0.61  & 0.41  & 0.69  & 0.30  & 0.47  & 0.37   & 0.28\\ 
&\multicolumn{1}{c|}{SVD-LLM}   & 11.82                   & \multicolumn{1}{c|}{20.05}                    & 0.29  & 0.77  & 0.64  & 0.51  & 0.72  & 0.30  & 0.54  & 0.45   & 0.31\\ \cmidrule{2-13} 
&\multicolumn{1}{c|}{\sysname}  & \textbf{8.01} (\textcolor{mygreen}{$\downarrow$32\%}) & \multicolumn{1}{c|}{\textbf{11.72} (\textcolor{mygreen}{$\downarrow$42\%})}       & \textbf{0.33}  & \textbf{0.79}  & \textbf{0.70}  & \textbf{0.58}  & \textbf{0.77}  & \textbf{0.36}  & \textbf{0.59} (\textcolor{mygreen}{$\uparrow$9\%})  & \textbf{0.46} (\textcolor{mygreen}{+0.01})   & \textbf{0.40} (\textcolor{mygreen}{+0.09})\\ \midrule
\end{tabular}
}
\label{tab:dataset_acc}
\end{table*}
}}
\newcommand{\largellmaccuracyTable}{{
\centering
\captionof{table}{Perplexity ($\downarrow$) on WikiText-2 and average accuracy ($\uparrow$) of six classification datasets of LLaMA-13B and LLaMA-30B under 20\% compression ratio.}
\resizebox{0.48\textwidth}{!}{%
\begin{tabular}{c|cc|cc}
\midrule
& \multicolumn{2}{c}{\textsc{LLaMA-13B}}    & \multicolumn{2}{c}{\textsc{LLaMA-30B}}         \\\midrule
\textsc{Method}     & Perplexity\textcolor{mygreen}{$\downarrow$}    & Accuracy\textcolor{mygreen}{$\uparrow$}       & Perplexity\textcolor{mygreen}{$\downarrow$}    & Accuracy\textcolor{mygreen}{$\uparrow$}    \\ \midrule
\color[HTML]{9B9B9B}Original                 & \color[HTML]{9B9B9B}5.09        & \color[HTML]{9B9B9B}0.59           & \color[HTML]{9B9B9B}4.10         & \color[HTML]{9B9B9B}0.61 \\\midrule
FWSVD               & 15.98         & 0.43           & 20.54         & 0.42             \\
ASVD                & 6.74          & 0.54           & 22.71         & 0.44             \\
SVD-LLM             & 6.61          & 0.55           & 5.63          & 0.57             \\\midrule
\sysname            &  \textbf{5.46} (\textcolor{mygreen}{$\downarrow$17\%})         & \textbf{0.56} (\textcolor{mygreen}{$\uparrow$2\%})       &\textbf{4.71} (\textcolor{mygreen}{$\downarrow$16\%})             & \textbf{0.60} (\textcolor{mygreen}{$\uparrow$5\%})          \\ \midrule
\end{tabular}
}
\label{tab:large_llm_acc}
}}
\newcommand{\sensitivityTable}{{
\centering
\captionof{table}{Perplexity ($\downarrow$) of compressed LLaMA-7B by SVD-LLM and \sysname with individual / both components under 20\% compression ratio on WikiText-2.}
\resizebox{0.48\textwidth}{!}{%
% \tiny
\begin{tabular}{c|c|c|c}
% \toprule[0.01pt]
\midrule
SVD-LLM  & \sysname (A)  &\sysname (T)   &\sysname  \\ \midrule          
7.94                 & 7.91 (\textcolor{mygreen}{$\downarrow$1\%})            & 7.43 (\textcolor{mygreen}{$\downarrow$6\%})  & 7.12  (\textcolor{mygreen}{$\downarrow$10\%})                      \\ \midrule
% \bottomrule[0.01pt]
\end{tabular}
}
\label{tab:sensitivity}
}}
\newcommand{\svdquantTable}{{
\centering
\captionof{table}{Perplexity ($\downarrow$) of LLaMA-7B compressed by 1-bit post-training quantization methods and \sysname on WikiText-2.}
% \vspace{2.5mm}
\resizebox{0.48\textwidth}{!}{%
\begin{tabular}{c|c|c|c}
\midrule
\textsc{Method}         & \textsc{Data Type}         &\textsc{Weight Memory} & \textsc{Perplexity}\\ \midrule
PB-LLM                  & 1-bit         & 1.9 GB            & 104.83 \\
BiLLM                   & 1-bit         & 1.5 GB            & 47.67           \\\midrule
\sysname                 & 16-bit        & 1.5 GB            & 99.64 \\\midrule
\sysname                  & 2-bit         & 1.5 GB             & \textbf{14.73} (\textcolor{mygreen}{$\downarrow$69\%})        \\\midrule
\end{tabular}
}
\label{tab:svd_quant}
}}
\newcommand{\svdquanthighTable}{{
\centering
\captionof{table}{Perplexity ($\downarrow$) of LLaMA-7B compressed by GPTQ and \sysname on WikiText-2.}
% \vspace{2.5mm}
\resizebox{0.48\textwidth}{!}{%
\begin{tabular}{c|c|c}
\midrule
\textsc{Method}                 &\textsc{Weight Memory} & \textsc{Perplexity}\\ \midrule
GPTQ-3bit                        & 2.8 GB            & 16.28 \\\midrule
\sysname                          & 2.8 GB            & 119 \\\midrule
\sysname + GPTQ-4bit                        & 2.8 GB             & \textbf{9.97} (\textcolor{mygreen}{$\downarrow$39\%})        \\\midrule
\end{tabular}
}
\label{tab:svd_quant_4bit}
}}
\newcommand{\svdprunepplTable}{{
\centering
\captionof{table}{Perplexity ($\downarrow$) of LLaMA-7B compressed by structured pruning methods and \sysname under various weight memory budgets on WikiText-2.}
% \vspace{-2mm}
\resizebox{0.48\textwidth}{!}{%
\begin{tabular}{c|c|c|c|c}
\midrule
& \multicolumn{4}{c}{\textsc{Perplexity ($\downarrow$) under weight memory budget}}\\ \midrule
\textsc{Method}       & 10 GB       & 9 GB      & 8 GB    & 7 GB \\ \midrule
LLM-Pruner            & 9.88        & 12.21      & 18.94     & 21.68 \\
SliceGPT              & 8.78        & 12.73      & 16.39     & 27.41           \\
BlockPruner           & 9.40         & 12.76      & 19.78     & 43.05        \\\midrule
\sysname              & \textbf{7.84} (\textcolor{mygreen}{$\downarrow$17\%})        & \textbf{8.48} (\textcolor{mygreen}{$\downarrow$34\%})      & \textbf{10.17} (\textcolor{mygreen}{$\downarrow$49\%})     & \textbf{15.62} (\textcolor{mygreen}{$\downarrow$28\%})        \\\midrule
\end{tabular}
}
% \vspace{2mm}
\label{tab:svd_prune_ppl}
}}
\newcommand{\svdpruneaccTable}{{
\centering
\captionof{table}{Average accuracy ($\uparrow$) of LLaMA-7B compressed by structured pruning methods and \sysname under various weight memory budgets.}
% \vspace{-2mm}
\resizebox{0.48\textwidth}{!}{%
\begin{tabular}{c|c|c|c|c}
\midrule
& \multicolumn{4}{c}{\textsc{Average Accuracy ($\uparrow$) under weight memory budget}}\\ \midrule
\textsc{Method}       & 10 GB       & 9 GB      & 8 GB    & 7 GB \\ \midrule
LLM-Pruner            & 0.49	& 0.47	& 0.35	& 0.31 \\
SliceGPT              & 0.51	& 0.46	& 0.38	& 0.29           \\
BlockPruner           & 0.48	& 0.46	& 0.33	& 0.20        \\\midrule
\sysname              & \textbf{0.52} (\textcolor{mygreen}{$\uparrow$2\%})        & \textbf{0.50} (\textcolor{mygreen}{$\uparrow$6\%})      & \textbf{0.42} (\textcolor{mygreen}{$\uparrow$11\%})     & \textbf{0.35} (\textcolor{mygreen}{$\uparrow$13\%})        \\\midrule
\end{tabular}
}
\vspace{2mm}
\label{tab:svd_prune_acc}
}}
\theoremstyle{plain}
\newtheorem{theorem}{Theorem}[section]
\theoremstyle{definition}
\theoremstyle{remark}
\newcommand{\sysname}{\texttt{SVD-LLM} \texttt{V2}\xspace}
\title{SVD-LLM V2: Optimizing Singular Value Truncation for \\ Large Language Model Compression}
\definecolor{mygreen}{HTML}{009901}
\definecolor{myred}{HTML}{A52A2A}
\author{Xin Wang \qquad Samiul Alam \qquad Zhongwei Wan \qquad Hui Shen \qquad Mi Zhang\\       The Ohio State University \\\texttt{\{wang.15980, alam.140, wan.512, shen.1780, mizhang.1\}@osu.edu}
\\ \url{https://github.com/AIoT-MLSys-Lab/SVD-LLM}}
\begin{document}
\maketitle
\begin{abstract}
Despite significant advancements, the practical deployment of Large Language Models (LLMs) is often hampered by their immense sizes, highlighting the need for effective compression techniques. 
Singular Value Decomposition (SVD) is a promising LLM compression technique. 
However, existing SVD-based compression methods fall short in reducing truncation losses, leading to less competitive performance in compressed models. 
In this work, we introduce \sysname, a SVD-based LLM compression method that optimizes singular value truncation in SVD compression with two techniques.
First, \sysname proposes to use theoretical truncation loss of weight matrices to assign a unique compression ratio to each weight matrix at different layers to accommodate weight redundancy heterogeneity.
%employs dynamic compression ratio allocation to effectively balance the extremely large truncation loss across different layers. 
%
Second, \sysname proposes loss-optimized weight truncation to ensure that the truncated singular values result in a lower and more stable truncation loss in practice. 
We evaluate \sysname on ten datasets and five LLMs at various scales. 
Our results show \sysname outperforms state-of-the-art SVD-based LLM compression methods. 
Our code is available at \url{https://github.com/AIoT-MLSys-Lab/SVD-LLM}.
\end{abstract}
\section{Introduction}
\label{sec:introduction}
Despite the outstanding performance Large Language Models (LLMs) exhibit in various tasks~\citep{DBLP:journals/corr/abs-2303-18223, DBLP:journals/corr/abs-2306-02781,DBLP:journals/corr/abs-2406-13035, DBLP:journals/corr/abs-2409-09808, wan2025meda}, the significant resources consumed limit their widespread accessibility~\citep{DBLP:journals/tmlr/Wan0LA0LQYZZC024, DBLP:journals/internet/WangWHZAZK24,zhou2024survey}. Model compression~\citep{DBLP:journals/corr/abs-2308-07633, shen2025efficient} is one effective approach to reduce resource consumption. To avoid resource-intensive retraining, LLM compression is often conducted in a post-training manner. Techniques such as LLM quantization~\citep{DBLP:conf/iclr/YuanSD24, DBLP:conf/icml/HuangLQLZ0M024}, unstructured pruning~\citep{DBLP:conf/icml/FrantarA23}, and structured pruning~\citep{DBLP:conf/nips/MaFW23, DBLP:conf/iclr/AshkboosCNHH24, DBLP:journals/corr/abs-2406-10594} have been proposed.

\begin{figure}[t]
\centering
\includegraphics[width=0.48\textwidth]{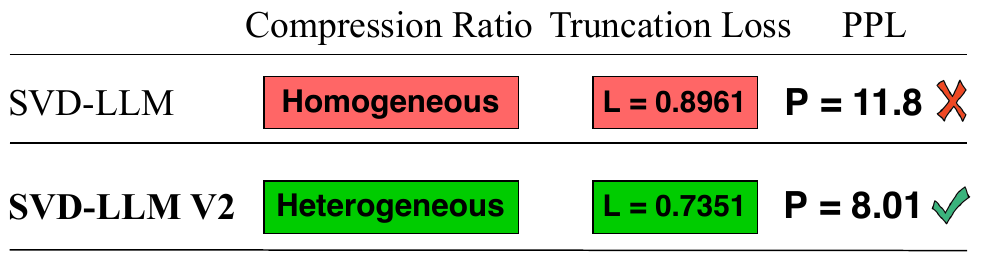}
\caption{Comparison between \sysname and SVD-LLM. We randomly select a weight matrix from LLaMA-3 8B and compare the normalized truncation loss and perplexity (PPL) under 20\% compression ratio. }
\label{fig:motivation}
\end{figure}

Low-rank approximation, such as Singular Value Decomposition (SVD) is also an effective technique for compressing LLMs. Compared with quantization and unstructured pruning, SVD compression is more hardware-friendly. 
Recently, a few SVD-based LLM compression methods
% such as FWSVD~\citep{DBLP:conf/iclr/HsuHCLSJ22}, ASVD~\citep{DBLP:journals/corr/abs-2312-05821} and SVD-LLM~\citep{DBLP:journals/corr/abs-2403-07378} 
have been proposed. At a high level, these methods all focus on reducing the truncation loss during SVD compression to reserve accuracy.  
Specifically, FWSVD~\citep{DBLP:conf/iclr/HsuHCLSJ22} reduces truncation loss by estimating weight importance and preserving more important weights.
ASVD~\citep{DBLP:journals/corr/abs-2312-05821} injects a scaling matrix to reduce the truncation loss but was not able to achieve theoretical minimum truncation loss at each LLM layer. 
SVD-LLM~\citep{DBLP:journals/corr/abs-2403-07378}, on the other hand, fills this gap by proposing a whitening matrix that obtains theoretical minimum truncation loss at each LLM layer, demonstrating superior performance.

% However, these methods often exhibit severe performance degradation.
%However, due to XXX, SVD-LLM fails to obtain the theoretical minimum truncation loss in practice. 
%
Despite such advantage, SVD-LLM has two limitations.
First, SVD-LLM applies a homogeneous compression ratio to all the weight matrices. This coarse-grained setup unfortunately overlooks the heterogeneity of weight redundancy across different LLM layers. 
Second, SVD-LLM utilizes Cholesky decomposition for weight truncation. However, Cholesky decomposition requires the matrix being decomposed to be positive-definite, a condition that is challenging to fulfill in practice. Moreover, Cholesky decomposition introduces numerical instability throughout its iterative process. 
As a consequence, SVD-LLM
could still lead to high truncation loss in practice.
%, even under a low compression ratio\footnote{The compression ratio refers to the percentage of model parameters reduced through compression.}. 
%As a result, given the input activation, the output between the compressed weight matrices and the original ones in the LLM will be significantly different, making the compressed LLM behave much poorer than the original LLM. 
%

% \newpage
In this paper, we propose \sysname, a SVD-based post-training LLM compression method that effectively addresses the two limitations of SVD-LLM.
First, to address the heterogeneity of weight redundancy across layers, \sysname uses the theoretical truncation loss of
weight matrices at each layer as the guidance to assign a unique compression
ratio to each weight matrix based on its type at different layers.
%allocates compression ratios among the weight matrices in different layers.
%to remove the necessity for the decomposed matrix to be positive-definite and to enhance numerical stability,
Second, \sysname substitutes the Cholesky decomposition with two rounds of SVD for weight truncation, which we prove to achieve the theoretical minimum truncation under the optimized compression ratio.
In doing so, \sysname is able to achieve better perplexity with lower truncation loss than SVD-LLM (\cref{fig:motivation}).

%For a better demonstration, we compare the truncation loss and the final performance of SVD-LLM~\citep{DBLP:journals/corr/abs-2403-07378} and \sysname on LLaMA-3 8B. As shown in~\cref{fig:motivation}, \sysname achieves a better perplexity with lower truncation loss than SVD-LLM.

We evaluate \sysname on ten datasets covering various language modeling, classification, and generation tasks as well as five LLMs with various backbones and scales. Our results demonstrate the superiority of \sysname with three key findings:

\vspace{-2mm}
\begin{itemize}
    \item \sysname consistently outperforms state-of-the-art SVD-based LLM compression methods across all ten datasets and five LLMs.
    %, achieving over 99\% lower perplexity under 40\% compression ratios and above.
    \vspace{-2mm}
    \item  \sysname outperforms state-of-the-art structured pruning-based LLM compression methods with up to 28\% lower perplexity under 7 GB memory budget. 
    When comparing to state-of-the-art 1-bit quantization-based LLM compression methods, \sysname outperforms PB-LLM and achieves 5\% lower perplexity. 
    %compressing a 16-bit LLM, \sysname achieves performance comparable to using 1-bit post-training quantization techniques under the same memory budget. 
    %
    Moreover, by combining with 2-bit quantization, \sysname is able to outperform 1-bit BiLLM, demonstrating the promise of combining SVD and quantization-based methods for advancing the frontier of post-training LLM compression.
    %when compressing a 2-bit LLM, \sysname surpasses the performance of the state-of-the-art 1-bit quantization with up to 69\% lower perplexity.
    \vspace{-2mm}
    \item LLMs compressed by \sysname achieve inference speedup on real hardware. In particular, LLMs compressed by \sysname are able to achieve a throughput speedup of up to 2.71$\times$ compared to the original LLMs on a single NVIDIA A100 GPU.
\end{itemize}

\section{Related Work}
\label{sec:related_works}
\vspace{-1mm}

\begin{figure*}[t]
\centering
\includegraphics[width=1\textwidth]{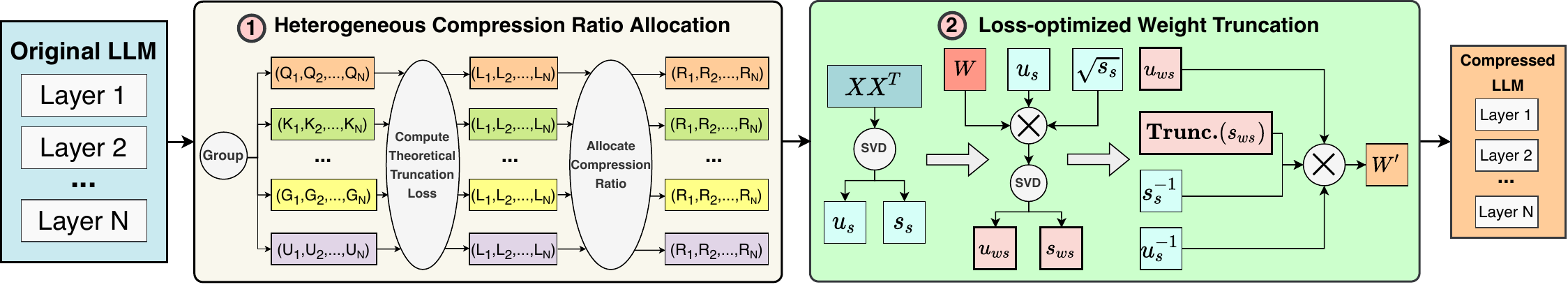}
\caption{Overview of \sysname.}
\label{fig:framework}
\end{figure*}

\subsection{Large Language Model Compression}
\label{subsc:large_language_model_compression}
Large Language Models (LLMs) typically contain billions of parameters, making traditional model compression techniques impractical due to the need for resource-intensive retraining. To address this, post-training methods that bypass retraining during compression have been developed. These methods generally fall into four categories: unstructured pruning, structured pruning, quantization, and low-rank approximation. 
Unstructured pruning~\citep{DBLP:conf/icml/FrantarA23} sets the individual weight values to zero without changing the overall architecture. However, its irregular sparsity is feasible only for speedups or memory savings on certain hardware. 
In contrast, structured pruning~\citep{DBLP:conf/nips/MaFW23,DBLP:conf/iclr/AshkboosCNHH24,DBLP:journals/corr/abs-2406-10594} removes entire channels from LLMs, simplifying hardware implementation but often suffering from accuracy degradation. 
Quantization~\citep{DBLP:journals/corr/abs-2210-17323,DBLP:journals/corr/abs-2411-07762} reduces the precision of the weight matrices for compression. However, it often fails to provide the desired inference speedups~\citep{DBLP:journals/corr/abs-2405-04532} and offers a limited range of compression options—typically between 2 to 8 bits—which hinders optimal memory utilization. Recent efforts~\citep{DBLP:conf/iclr/YuanSD24, DBLP:conf/icml/HuangLQLZ0M024} have explored 1-bit post-training quantization. Nevertheless, these approaches still suffer from accuracy drop, indicating that 1-bit quantization is still challenging in LLM compression.

\subsection{SVD for LLM Compression}
Singular Value Decomposition (SVD) reduces matrix sizes by truncating the smallest singular values. It then constructs two smaller, lower-rank matrices to approximate the original matrix~\citep{GOLUB1987317}. SVD is also feasible for LLM~\citep{DBLP:conf/iclr/HsuHCLSJ22, DBLP:journals/corr/abs-2312-05821, DBLP:journals/corr/abs-2403-07378,DBLP:journals/corr/abs-2408-09632}. To ensure better compression performance, existing post-training SVD-based LLM compression methods attempt to lower the truncation loss $L$ in the form of Frobenius norm as follows during LLM compression:

\begin{equation}
    L = ||WX-W'X||_F 
    \label{eq:t1}
\end{equation}

\noindent where $W$ is the weight matrix of the original LLM, $X$ is the activation of $W$, and $W'$ is the compressed low-ranking weight matrix. 
% %
% and many SVD-based compression techniques have been proposed. 
For example,
%\citet{DBLP:conf/iclr/HsuHCLSJ22} propose FWSVD which introduces Fisher information to weigh the importance of parameters.
%
\citet{DBLP:journals/corr/abs-2312-05821} propose ASVD, which scales the weight matrix using a diagonal matrix to normalize the impact of input channels on the weights to reduce the truncation loss. 
\citet{DBLP:journals/corr/abs-2403-07378} make further advancement by whitening the input matrix to mitigate its impact on SVD truncation with the guarantee of minimal theoretical truncation loss. 
Despite these progresses, existing methods still suffer from high truncation loss in practice, leading to accuracy degradation.
%, which severely hinders their real-world application.
\section{SVD-LLM V2}
\label{sec:methodology}

% \begin{figure*}[t]
% \centering
% \includegraphics[width=1\textwidth]{Figures/Framework.pdf}
% \caption{Overview of \sysname.}
% \label{fig:framework}
% \end{figure*}

% \begin{figure}[htbp]
%     \centering
%     \includegraphics[width=0.45\textwidth]{}
%     \caption{Truncation loss by SVD-LLM of weight matrices in one randomly selected decoding layer in LLaMA-3 8B on WikiText-2 under 50\% compression ratio.}
%     \label{fig:motivation_2}
% \end{figure}

%In this section, we present the main design of our proposed \sysname for compressing LLMs. 
\cref{fig:framework} provides an overview of \sysname. 
Specifically, \sysname groups the weight matrices across all the layers in the original LLM by type, such as query ($Q$) and key ($K$) in attention blocks, and Gate ($G$) and Up ($U$) in MLP blocks. It then computes the theoretical truncation loss of the weight matrices and assigns a unique compression ratio to each weight matrix within each group based on the computed truncation loss.
%After obtaining all compression ratios across the weight matrices, 
Lastly, \sysname performs loss-optimized weight truncation to obtain the compressed LLM.
Below, we describe the details of the two main components of \sysname: (1) heterogeneous compression ratio allocation and (2) loss-optimized weight truncation. 

\subsection{Heterogeneous Compression Ratio Allocation}
\label{subsec:dynamic_ratio_allocation}
\noindent \textbf{Motivation:} Since different weight matrices in LLMs often exhibit different levels of redundancy, applying a homogeneous compression ratio to all the weight matrices would incur high truncation loss for those with low redundancy~\citep{DBLP:journals/corr/abs-2406-10594,DBLP:journals/corr/abs-2406-15786,DBLP:journals/corr/abs-2407-19126}. 
To demonstrate this, we use SVD-LLM to measure the truncation loss of the query matrix across different layers in LLaMA-3 8B on WikiText-2 dataset with 50\% compression ratio. 
As shown in~\cref{fig:layer}, the truncation loss varies at different layers. For example, the query matrix in the 27th layer has a much higher truncation loss than that of the first layer, indicating the 27th layer should be compressed under a smaller compression ratio, while a larger compression ratio should be applied to the first layer. 
However, existing SVD-based LLM compression methods either overlook this variation or require resource-intensive operations to determine the specific compression ratios, making them impractical for compressing LLMs at larger scales. 
Therefore, it is essential to develop a more efficient approach to apply different compression ratios at different weight matrices to reduce the truncation loss.

\begin{figure}[t]
    \centering
    \includegraphics[width=0.45\textwidth]{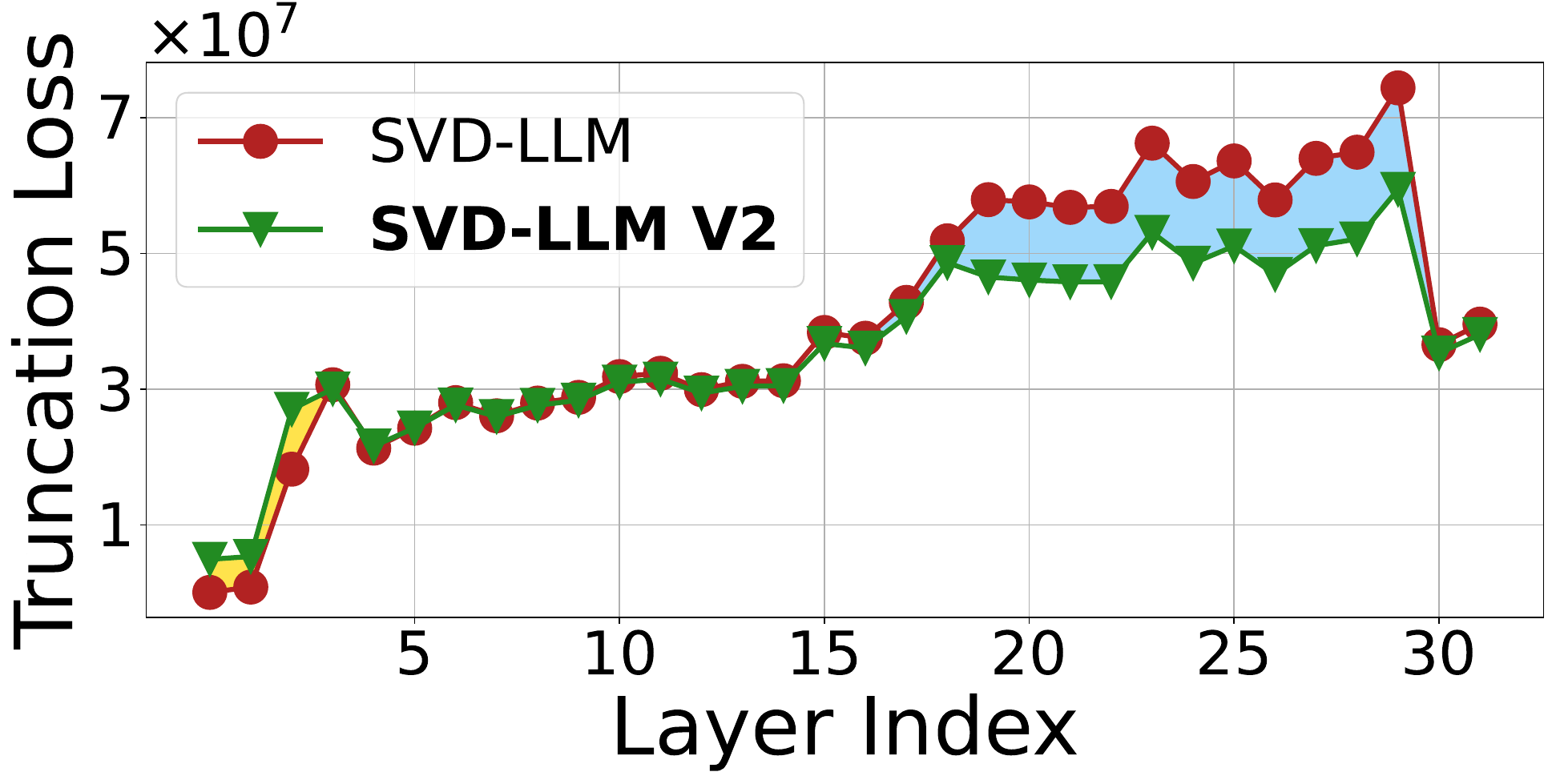}
    \caption{Comparison between SVD-LLM and \sysname on the truncation loss of the query weight matrix across different layers in LlaMA-3 8B on WikiText-2 dataset with 50\% compression ratio.}
    \label{fig:layer}
\end{figure}

\begin{algorithm}[t]
\captionsetup{font=small}
\caption{Pseudocode of Heterogeneous Compression Ratio Allocation in \sysname}
\small
\begin{algorithmic}[1] % The number [1] tells LaTeX to number each line
\Statex \textbf{Input:} $M$: Original LLM 
\Statex \hspace{3em} $x$: Input activation
\Statex \hspace{3em} $R$: Target compression ratio
\Statex \textbf{Output:} $R_d$: A list of allocated compression ratios
\Procedure{Ratio\_Allocation}{$M, S, R$} % Algorithm name and parameters
    \State $G \gets \boldsymbol{\operatorname{Group}}(M)$  \Comment{Group the weight by types}
    \State $R_d \gets \emptyset$ \Comment{Initialize the compression ratio list} 
        \For{$g$ \textbf{in} $G$}
        \State $L_G \gets \emptyset$ \Comment{Initialize the loss list in the group} 
            \For{$w$ \textbf{in} $g$}
            % \State $w' \gets \boldsymbol{\operatorname{Weight\_Truncation}}(w, x, R)$
                \State $L_{min} \gets \boldsymbol{\operatorname{Theoretical\_Loss}}(w, x, R)$
                \State $L_{G} \gets L_{G} \cup L_{min}$ 
            \EndFor
            \State $L_{G} \gets 1 / \boldsymbol{\operatorname{Log}}(L_{G})$ \Comment{Normalize $L_G$}
            \State $r \gets \boldsymbol{\operatorname{Len}}(L_{G}) \times R\times L_{min} / \boldsymbol{\operatorname{Sum}}(L_{G})$
            \State $R_d \gets R_d \cup r$ \Comment{Append $r$ to the list $R_d$} 
        \EndFor
    \State \Return{$R_d$}
\EndProcedure
\end{algorithmic}
\label{algo:compression_ratio}
\end{algorithm}

\noindent \textbf{Key Design:} 
The pseudocode of the heterogeneous compression ratio allocation is listed in~\cref{algo:compression_ratio}. 
Specifically, 
% to calculate the truncation loss more efficiently and accurately, 
% \sysname applies a whitening matrix $S$,  as done in SVD-LLM~\citep{DBLP:journals/corr/abs-2403-07378} to align the truncation loss with singular values. 
% We follow the same procedure in SVD-LLM to construct a calibration set $X$ and compute the whitening matrix $S$ with the Choloesky decompostion of $XX^T$. In this way, the truncation loss $L$ to truncate the multiple singular values $(\sigma_0, \sigma_1, \cdots, \sigma_k)$ could easily be obtained by $\sqrt{\sum_{i=0}^k \sigma_i^2}$. 
% \sysname applies its own loss-optimized weight truncation, which is more accurate and stable than the truncation done in SVD-LLM, as described in~\cref{subsec:loss_optimized_weight_truncation} to pre-compress the weight matrix under a given compression ratio $R$ and compute its corresponding truncation loss.
%
given that different types of weight matrices, such as query ($Q$) and key ($K$) in attention blocks, and Gate ($G$) and Up ($U$) in MLP blocks play different roles in an LLM, 
%it is inappropriate to consider them together for compression ratio allocation. 
%Therefore, given the specific compression ratio $R$, 
\sysname first groups the weight matrices across all the layers in the original LLM according to their types. 
Next, \sysname computes the theoretical minimum truncation loss of the weight matrices, i.e., $L_{min} = ||C-C'||_F$, where $C$ is the original matrix of $WX$ and $C'$  is its compressed version by SVD, respectively. It then inverses and normalizes $L_{min}$ by $1/\text{Log}(L_{min})$.
Finally, given the target model compression ratio $R$, the compression ratio of each weight matrix within a group is determined as $ \text{Len}(L_{G}) \times R \times L_{min} / \text{Sum}(L_{G})$, where $L_{G}$ denotes the list of theoretical truncation loss for all matrices within the same group, $\text{Len}(L_{G})$ denotes the group size and $\text{Sum}(L_{min})$ denotes the sum of the loss within this group. 
In this way, \sysname bypasses the need to measure end-to-end perplexity to determine compression ratios, as done in ASVD and is time-consuming. Instead, it utilizes truncation loss, which is easy to obtain and thereby enhancing the efficiency of the algorithm.
As shown in~\cref{fig:layer}, with the proposed heterogeneous compression ratio allocation scheme, \sysname effectively reduces the truncation loss (the blue area) with only a small increase of several small truncation losses (the yellow area).

% After determining the specific compression ratio for each weight matrix in the LLM, \sysname incorporates a loss-optimized weight truncation to truncate the weight matrices based on their assigned compression ratios to obtain the compressed LLM.
%the next step is to apply the real truncation on the weights  
%
In the next section, we describe the details of the proposed loss-optimized weight truncation.  
%
% \vspace{-3mm}

\subsection{Loss-optimized Weight Truncation}
\label{subsec:loss_optimized_weight_truncation}
\noindent \textbf{Motivation:} 
After determining the specific compression ratio for each weight matrix in the LLM, the next step is to truncate the weights according to their assigned compression ratios. 
To reduce truncation loss $L= ||WX-W'X||_F$ during SVD compression, SVD-LLM first constructs the whitening matrix $S$ by applying Cholesky decomposition on $XX^T$. It then performs SVD and truncation on $WS$. Although SVD-LLM has been theoretically proven to achieve the lowest truncation loss at a given compression ratio, our empirical study shows that its actual truncation loss is frequently above the theoretical minimum. This is mainly due to the numerical instability involved in performing the Cholesky decomposition on a large-scale matrix during truncation. Moreover, the Cholesky decomposition requires $XX^T$ to be positive definite, which is often hard to satisfy.
%during compression.

%
To demonstrate this, we randomly select two weight matrices, $A$ and $B$, in LLaMA-3 8B and compute their truncation loss by SVD-LLM using 256 randomly selected data in the C4 dataset under compression ratios 20\% and 60\%. As shown in~\cref{tab:loss_comparison}, because $XX^T$ is not positive definite, SVD-LLM fails to compress matrix $A$. For matrix $B$, even when the compression ratio is as low as 20\%, SVD-LLM still achieves a larger truncation loss in practice than in theory, and this difference even increases with increasing compression ratio. 

SVD-based LLM compression methods such as Balco~\citep{DBLP:journals/corr/abs-2405-10616} have been proposed that utilize pooled covariance matrices to precisely estimate the feature distribution to reduce truncation loss. However, these methods cannot guarantee their theoretical optimality during SVD truncation. 
Therefore, it is necessary to design a new way to optimize the truncation loss for SVD compression. 

\losscomparisonTable
\vspace{2mm}

% \begin{figure}[htbp]
%     \centering
%     \includegraphics[width=0.45\textwidth]{}
%     \caption{Truncation loss by SVD-LLM and \sysname of the weight matrix randomly selected in LLaMA-3 8B on WikiText-2 under different compression ratios.}
%     \label{fig:loss}
% \end{figure}

\begin{algorithm}[t]
\captionsetup{font=small}
\caption{Pseudocode of Loss-optimized Weight Truncation in \sysname}
\small
\begin{algorithmic}[1] % The number [1] tells LaTeX to number each line
\Statex \textbf{Input:} $W$: Original weight matrix 
\Statex \hspace{3em} $X$: Input activation
\Statex \hspace{3em} $R$: Target compression ratio
\Statex \textbf{Output:} $W'$: Compressed weight matrix
\Procedure{Weight\_Truncation}{$W, X, R$} % Algorithm name and parameters
    \State $S \gets XX^T$  \Comment{Construct matrix $S$ from $X$}
    \State $U_s, S_s, V_s \gets \boldsymbol{\operatorname{SVD}}(S)$ \Comment{Perform SVD on matrix $S$} 
    \State $D \gets W\times U_s \times \sqrt{S_s}$  \Comment{Construct matrix $D$}
    \State $U_{ws}, S_{ws}, V_{ws} \gets \boldsymbol{\operatorname{SVD}}(D)$ \Comment{Perform SVD on matrix $D$}
    \State $T_s \gets \boldsymbol{\operatorname{Truncate}}(S_{ws}, R)$ \Comment{Perform SVD truncation on matrix $S_{ws}$ based on compression ratio $R$}
    \State $W' \gets U_{ws}\times T_s \times S_{s} ^{-1}\times U_{s}^{-1}$ \Comment{Construc $W'$}
    \State \Return{$W'$}
\EndProcedure
\end{algorithmic}
\label{algo:weight_truncation}
\end{algorithm}

\noindent \textbf{Key Design:}
The pseudocode of the proposed loss-optimized weight truncation is provided in~\cref{algo:weight_truncation}. Different from SVD-LLM, \sysname bypasses the Cholesky decomposition, resulting in a more straightforward process with improved numerical stability. Specifically, given the input activation $X$, \sysname conducts SVD on $XX^T$ to obtain the decomposed matrices $U_s, S_s, V_s$, where $S_s$ is the diagonal matrix with singular values. It then conducts another round of SVD on $W\times U_s\times \sqrt{S_s}$ to obtain $U_{ws}, S_{ws}, V_{ws}$. The final compressed weight matrix $W'$ can be obtained via $U_{ws}\times \boldsymbol{\operatorname{Trunc.}}(S_{ws}) \times S_{s} ^{-1}\times U_{s}^{-1} $, where $\boldsymbol{\operatorname{Trunc.}}(C)$ denotes the rank-k truncation of matrix $C$ during SVD compression.

In the following, we provide a theoretical proof on why such truncation offers the same theoretical minimum truncation loss as SVD-LLM.

\begin{theorem}
\label{thm:whitening}
If $U_s, S_s, V_s$ are obtained by SVD decomposition of $XX^T$ and $U_{ws}, S_{ws}, V_{ws}$ are obtained by SVD decomposition of $W\times U_s\times \sqrt{S_s}$, the compressed weight matrix $W' = U_{ws}\times \boldsymbol{\operatorname{Trunc.}}(S_{ws}) \times V_{ws} \times \sqrt{S_{s}}^{-1} \times U_{s}^{-1}$ ensures the theoretical minimum truncation loss.
\end{theorem}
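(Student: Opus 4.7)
The plan is to reduce the statement to the Eckart--Young--Mirsky theorem by observing that the truncation loss $\|WX - W'X\|_F$ depends on the activation $X$ only through the Gram matrix $XX^T$. Consequently, any factorization $XX^T = SS^T$ yields the same re-expression $\|WX - W'X\|_F = \|(W-W')S\|_F$, and the entire difference between SVD-LLM (which obtains $S$ by Cholesky) and \sysname (which obtains it from the SVD of $XX^T$) lies in how $S$ is constructed, not in the subsequent rank-$k$ optimization. The target, therefore, is to show that the two-round SVD procedure implicitly builds a valid such $S$ and that truncating the second SVD's singular values is the Frobenius-optimal choice.

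Concretely, I would proceed in three steps. First, observe that $XX^T$ is symmetric and positive semi-definite, so its SVD reduces to an eigendecomposition with $V_s = U_s$ and $S_s \succeq 0$; the choice $S := U_s\sqrt{S_s}$ then satisfies $SS^T = U_s S_s U_s^T = XX^T$, and the Frobenius identity $\|AX\|_F^2 = \operatorname{tr}(A\,XX^T\,A^T)$ yields
\begin{equation*}
\|WX - W'X\|_F \;=\; \|(W-W')S\|_F.
\end{equation*}
Second, let $M := WS = W U_s \sqrt{S_s}$; this is precisely the matrix whose SVD $U_{ws}S_{ws}V_{ws}^T$ is computed in the second round, and Eckart--Young--Mirsky identifies $M_k := U_{ws}\operatorname{Trunc.}(S_{ws})V_{ws}^T$ as its Frobenius-optimal rank-$k$ approximation. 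Third, since $S$ is (generically) invertible and $\operatorname{rank}(W') = \operatorname{rank}(W'S)$, the substitution $Z := W'S$ gives
\begin{equation*}
\min_{\operatorname{rank}(W')\le k}\|(W-W')S\|_F \;=\; \min_{\operatorname{rank}(Z)\le k}\|M - Z\|_F,
\end{equation*}
attained by $W'S = M_k$, i.e.\ $W' = U_{ws}\operatorname{Trunc.}(S_{ws})V_{ws}^T\sqrt{S_s}^{-1}U_s^{-1}$, matching the theorem. The resulting loss coincides with the theoretical lower bound established for SVD-LLM, because both procedures reduce to best rank-$k$ approximation under the inner product induced by $XX^T$.

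The hard part will be the degenerate case that motivates the paper in the first place: when $XX^T$ is singular, $\sqrt{S_s}^{-1}$ and $U_s^{-1}$ do not literally exist and must be read as pseudo-inverses on the range of $XX^T$. The clean way to handle this is to note that on $\ker(XX^T)$ the truncation-loss contribution vanishes identically, so one may restrict $W'$ to the orthogonal complement of this kernel, on which $S$ is genuinely invertible and the Eckart--Young--Mirsky argument applies verbatim. Verifying that the reconstruction through pseudo-inverses still attains the optimum---and hence that \sysname succeeds on precisely those matrices where Cholesky fails, as illustrated in \cref{tab:loss_comparison}---is the one place where care is needed, but it is exactly this robustness that formalizes the practical improvement claimed over SVD-LLM.
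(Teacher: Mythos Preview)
Your proposal is correct and matches the paper's argument: both set $S = U_s\sqrt{S_s}$, reduce $\|(W-W')X\|_F$ to $\|(W-W')S\|_F$, and then invoke Eckart--Young--Mirsky on $WS$ to identify the optimal $W'$. The only difference is how the norm reduction is justified---the paper shows $S^{-1}X$ equals the right singular factor $V_x$ of $X$ and hence has orthonormal rows, whereas you go directly through the trace identity $\|AX\|_F^2 = \operatorname{tr}(AXX^TA^T)$; your explicit treatment of the singular case via pseudo-inverses is also more careful than the paper, which tacitly assumes invertibility throughout.
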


\begin{proof}
\label{proof:truncation}
Since $XX^T$ is the symmetric matrix, suppose that the singular vectors and values of input activation $X$ is $U_x, S_x, V_x$, we have $U_s = U_x$ and $\sqrt{S_s} = S_x$. Suppose $S = U_s\times \sqrt{S_s}$, thus $S^{-1} = \sqrt{S_{s}}^{-1} \times U_{s}^{-1}$, and we have:

 % \vspace{-2mm}
{\small
\begin{equation}
    \begin{aligned}
        &S^{-1} X = \sqrt{S_{s}}^{-1} U_{s}^{-1} X = S_{x}^{-1} U_{x}^{-1} X\\
        &= S_{x}^{-1} U_{x}^{-1} U_x S_x V_x = Vx
    \end{aligned}
\label{formula:derive}
\end{equation}
}

\noindent Therefore, $S^{-1}\times X$ is orthogonal and $||A\times S^{-1}\times X||_F = ||S^{-1}\times X||_F$, and the final truncation loss could be derived as:

%\vspace{-4mm}
{\small
\begin{equation}
    \begin{aligned}
        &L^2 =||W X-W^{\prime} X||_F^2 \\
        &= ||WSS^{-1}X - U_{ws}\times \boldsymbol{\operatorname{Trunc.}}(S_{ws}) \times V_{ws} \times S^{-1} X||_F^2\\
        &= ||(WS - U_{ws}\times \boldsymbol{\operatorname{Trunc.}}(S_{ws}) \times V_{ws})S^{-1}X||_F^2\\
        &= ||WS - U_{ws}\times \boldsymbol{\operatorname{Trunc.}}(S_{ws}) \times V_{ws}||_F^2\\
        &= ||\boldsymbol{\operatorname{SVD}}(WS)||_F^2 = ||\boldsymbol{\operatorname{SVD}}(W\times U_x\times S_x)||_F^2\\
        & = ||\boldsymbol{\operatorname{SVD}}(W\times U_x\times S_x\times V_x)||_F^2\\
        & = ||\boldsymbol{\operatorname{SVD}}(WX)||_F^2 = L_{min} ^2
    \end{aligned}
\label{formula:loss_1}
\end{equation}
}

\noindent Therefore, the designed SVD truncation ensures the theoretical minimum truncation loss.
\end{proof}

For a better demonstration, we also implement the new loss-optimized weight truncation by \sysname on LLaMA-3-8B. As shown in~\cref{tab:loss_comparison}, \sysname achieves better numerical stability and lower truncation loss than SVD-LLM.

\section{Experiments and Analysis}
\label{sec:experiments}
\accuracyTable
\textbf{Baselines.} 
We compare \sysname against two groups of methods. (1) Three state-of-the-art SVD-based LLM compression methods: FWSVD~\citep{DBLP:conf/iclr/HsuHCLSJ22}, ASVD~\citep{DBLP:journals/corr/abs-2312-05821}, and SVD-LLM~\citep{DBLP:journals/corr/abs-2403-07378} (Section~\ref{subsec:overall_performance}). (2) Other types of LLM compression methods. These include three state-of-the-art pruning-based LLM compression methods: LLM-Pruner~\citep{DBLP:conf/nips/MaFW23}, SliceGPT~\citep{DBLP:conf/iclr/AshkboosCNHH24}, and BlockPruner~\citep{DBLP:journals/corr/abs-2406-10594}, and two state-of-the-art quantization-based LLM compression methods: PB-LLM~\citep{DBLP:conf/iclr/YuanSD24}, and BiLLM~\citep{DBLP:conf/icml/HuangLQLZ0M024} (Section~\ref{subsection:Comparison_with_other_compression_methods}).

\vspace{1mm}
\noindent\textbf{Models and Datasets.} To demonstrate the generability of our method, we evaluate the performance of \sysname on five models at various scales (LLaMA-7B, 13B, 30B, LLaMA3-8B~\citep{DBLP:journals/corr/abs-2307-09288}, OPT-6.7B~\citep{DBLP:journals/corr/abs-2205-01068}) and ten datasets including two language modeling datasets (WikiText-2~\citep{DBLP:conf/iclr/MerityX0S17} and C4~\citep{DBLP:journals/jmlr/RaffelSRLNMZLL20}), six classification datasets (OpenbookQA~\citep{DBLP:conf/emnlp/MihaylovCKS18}, WinoGrande~\citep{DBLP:conf/aaai/SakaguchiBBC20}, HellaSwag~\citep{DBLP:conf/acl/ZellersHBFC19}, Arc\_e~\citep{DBLP:journals/corr/abs-1803-05457}, PIQA~\citep{DBLP:conf/aaai/BiskZLGC20}, MathQA~\citep{DBLP:conf/naacl/AminiGLKCH19}),  and two generation datasets (TruthfulQA~\citep{DBLP:conf/acl/LinHE22} and GSM8K~\citep{DBLP:journals/corr/abs-2110-14168})  with the LM-Evaluation-Harness framework~\citep{eval-harness}.

\vspace{1mm}
\noindent\textbf{Implementation Details.} 
We randomly select $256$ WikiText-2 samples as the calibration data. To mitigate the error raised by the Choleksy decomposition in SVD-LLM due to positive definite, we followed the implementation of SVD-LLM~\citep{DBLP:journals/corr/abs-2403-07378} to add the small noise into the decomposed matrices. The compression ration in our experiments refers to the parameter reduction of LLM achieved through compression. All of the experiments are conducted on A100 GPUs.

\subsection{Performance Comparison}
\label{subsec:overall_performance}
%To examine the generability of \sysname, 
We first compare \sysname against state-of-the-art SVD-based LLM compression methods from four aspects: (1) performance on different LLMs, (2) performance on LLMs with larger scales, (3) performance under different compression ratios, and (4) compression speed.

\vspace{1mm}
\noindent\textbf{Performance on Different LLMs.} 
We compare the performance between \sysname and the baselines on three different LLMs, including LLaMA-7B, OPT-6.7B, and LLaMA-3 8B  under 20\% compression ratio on ten datasets. As shown in~\cref{tab:dataset_acc},  \sysname consistently achieves better and more stable performance than all the SVD-based LLM compression baselines across all three LLMs and all ten datasets. 
In particular, \sysname achieves up tp 42\% perplexity reduction and 9\% accuracy
improvement with better generation ability compared to prior state-of-the-art method SVD-LLM on LLaMA-3 8B.

\vspace{1mm}
\noindent\textbf{Performance on LLMs with Larger Scales.} 
We compare the performance between \sysname and the baselines on LLaMA-13B and LLaMA-30B under 20\% compression ratio on WikiText-2 and six classification datasets. As shown in~\cref{tab:large_llm_acc}, \sysname consistently outperforms all the baselines on both 13B and 30B model sizes.

\largellmaccuracyTable

\vspace{1mm}
\noindent\textbf{Performance under Different Compression Ratios.}
We compare the performance between \sysname and the baselines on LLaMA-7B under compression ratio ranging from 20\% to 80\% on WikiText-2 and six classification datasets. As shown in~\cref{fig:ratio}, \sysname consistently outperforms all baselines, and the performance gain compared to the best-performing baseline increases as the compression ratio increases. 

\vspace{1mm}
\noindent\textbf{Compression Speed.}
Besides measuring the performance of the compressed LLMs, we also evaluate the compression speed. Specifically, we measure the A100 GPU hours used by \sysname and the baseline methods for compressing LLaMA-7B under 20\% compression ratio. 
Our results show that FWSVD takes about 6 GPU hours, ASVD takes about 5.5 GPU hours, SVD-LLM takes about 15 minutes, and \sysname takes about 18 minutes to finish the compression. 
FWSVD requires gradient calculation, thus consumes a significant amount of time for compression. 
For the other methods, the main reason for such a variation is their respective techniques for allocating compression ratios among weight matrices. SVD-LLM assigns the same compression ratio to all weight matrices, enabling the fastest operation but sacrificing accuracy. ASVD, however, determines the compression ratio by regularly evaluating the end-to-end perplexity, which slows down its compression process. In contrast, \sysname allocates the compression ratio directly from its truncation loss, making it significantly faster than ASVD.

\begin{figure}[t]
    \begin{subfigure}{0.48\textwidth}
            \centering
            \includegraphics[width=\linewidth]{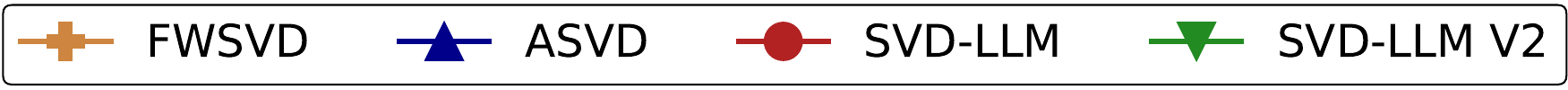}
            % \vspace{1mm}
        \end{subfigure}
        \\
         \begin{subfigure}{0.235\textwidth}
            \centering
            \includegraphics[width=\linewidth]{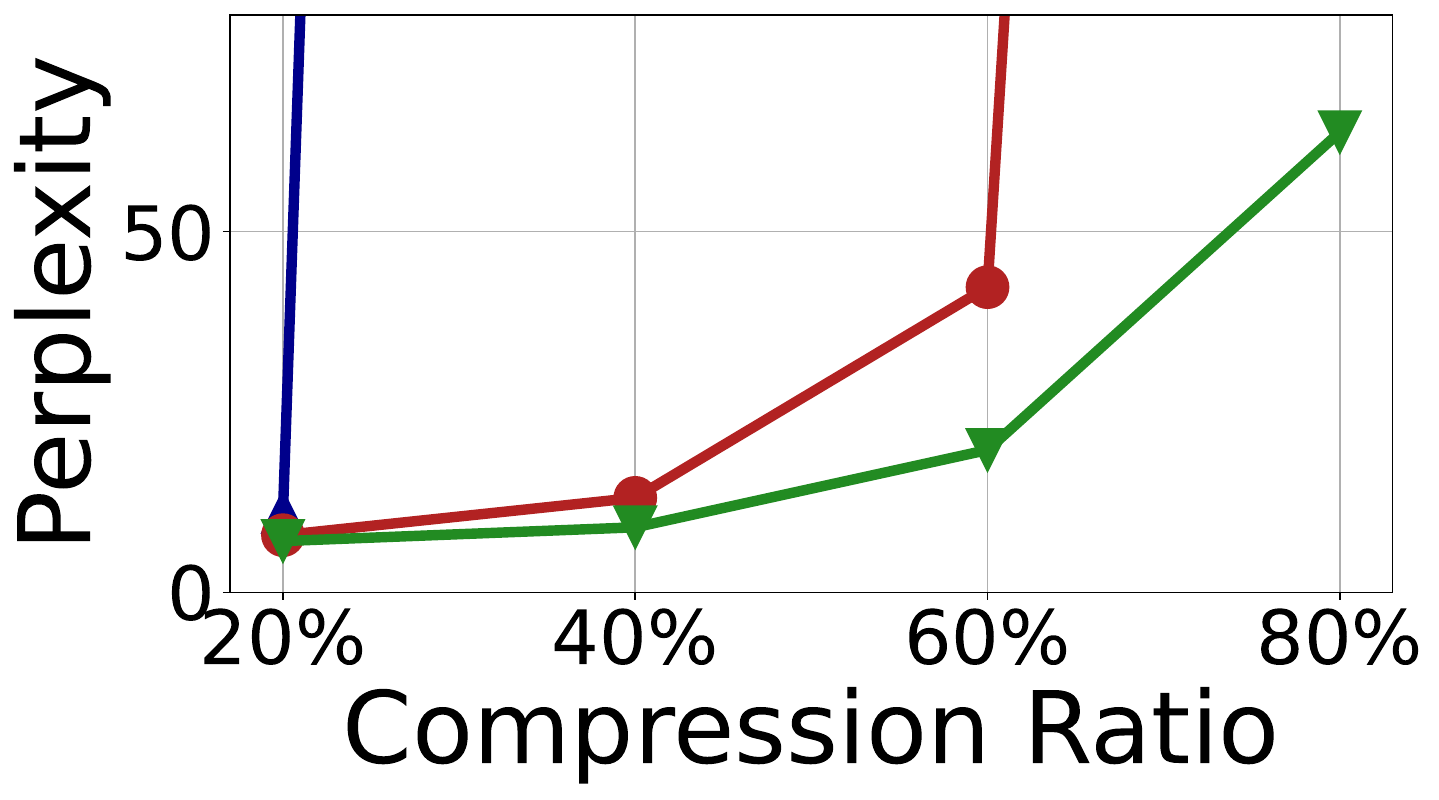}  % 插入图像
            \caption{Perplexity}
            \label{fig:ppl_ratio}
        \end{subfigure}
        \begin{subfigure}{0.235\textwidth}
            \centering
            \includegraphics[width=\linewidth]{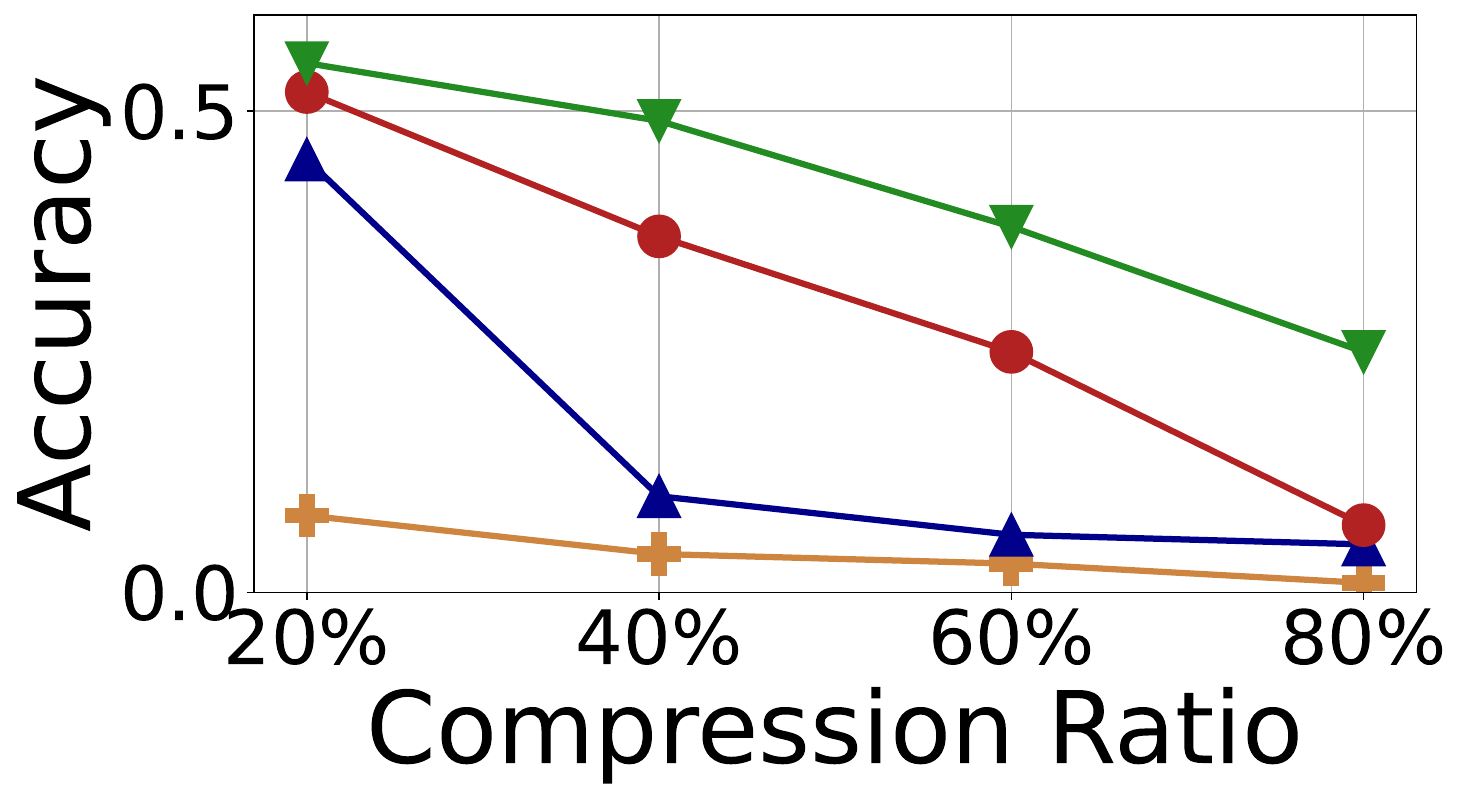}  % 插入图像
            \caption{Average Accuracy}
            \label{fig:acc_ratio}
        \end{subfigure}
\caption{Perplexity on WikiText-2 and average accuracy on six classification datasets of LLaMA-7B compressed by \sysname  and other SVD-based LLM compression baselines under 20\% to 80\% compression ratios. The perplexity values of FWSVD and ASVD are larger than 100, thus are not shown in the figure.}
\vspace{-5mm}
\label{fig:ratio}
\end{figure}

\subsection{Inference Speedup of \sysname }
\label{subsec:inference_speedup}
% \vspace{-2mm}
To evaluate the inference speedup of models compressed by \sysname, we measure the numbers of tokens generated per second from both the original LLaMA-7B and the model compressed by \sysname under different compression ratios on a single NVIDIA A100 GPU. 
For a fair comparison, we fix the batch size to 4, the prefill length to 1024,  and the decoding length to 256. As shown in~\cref{fig:speedup}, \sysname consistently achieves faster token generation speeds across all the compression ratios. 
More importantly, the speedup becomes more significant as the compression ratio increases, resulting in a speedup of 1.29x under 20\% compression ratio, 1.63x under 40\% compression ratio, 2.08x under 60\% compression ratio, and 2.71x under 80\% compression ratio. 

% \vspace{-2mm}
\begin{figure}[t]
    \centering
    \includegraphics[width=0.4\textwidth]{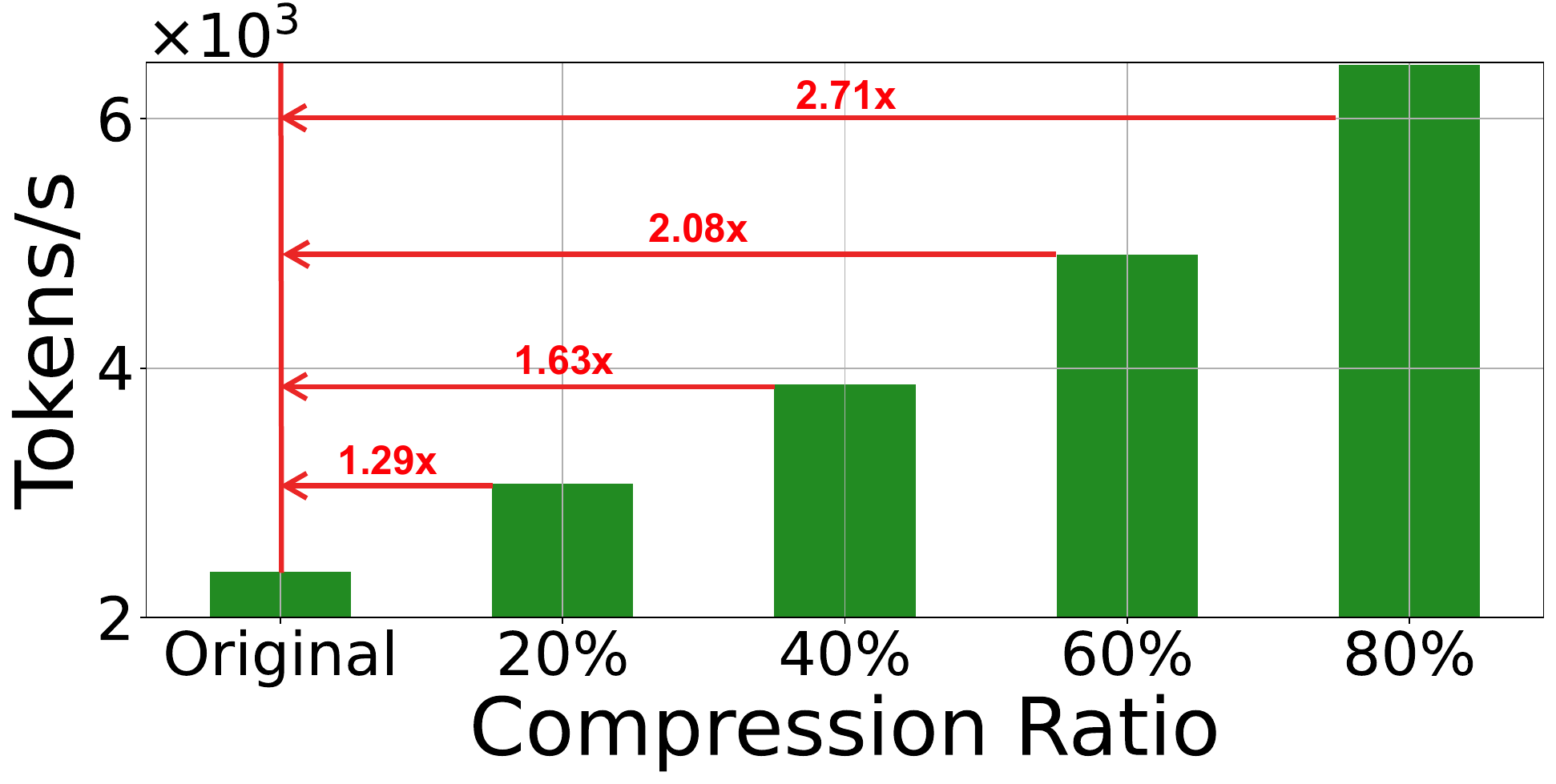}
    \caption{Throughput (Tokens/s) achieved by original LLaMA-7B and its compressed version by \sysname under different compression ratios on a single NVIDIA A100 GPU. We fix the batch size to 4, prefill length to 1024,  and decoding length to 256.  The speedup over the original LLM is marked in red.}
    \label{fig:speedup}
    % \vspace{-10mm}
\end{figure}

\vspace{2mm}
\sensitivityTable

\subsection{Ablation Study}
\label{subsec:ablation_study}
\sysname has two key components, both of which optimize the truncation loss. 
In our ablation study, we first evaluate the individual contribution of each of the two components to the compression performance. 
Next, since both components fully utilize the whitening matrix $S$, which is calculated with a randomly selected calibration set, we evaluate the impacts of different calibration data on the performance of \sysname.

\vspace{1mm}
\noindent\textbf{Component Sensitivity Study.} We first evaluate the individual contribution of the two components (i.e., heterogeneous compression ratio allocation and loss-optimized weight truncation) of \sysname. 
Let \sysname (A) denote the version of \sysname with heterogeneous compression ratio allocation only; and \sysname (T) denote the version of \sysname with loss-optimized weight truncation only.
%; and \sysname (A+T) denote the version of \sysname with both heterogeneous compression ratio allocation and loss-optimized weight truncation.
%
The results are shown in~\cref{tab:sensitivity}. We have two observations. 
(1) Both \sysname (A) and \sysname (T) outperform SVD-LLM, demonstrating the effectiveness of each of these two components alone in achieving superior compression performance.
(2) \sysname outperforms \sysname (A) and \sysname (T), demonstrating the necessity of having both components.

\vspace{1mm}
\noindent\textbf{Impact of Calibration Set.} Next, we examine the impact of the calibration set on the compression performance of \sysname. Specifically, we measure the changes in perplexity of LLaMA-7B compressed by \sysname under 20\% compression ratio on WikiText-2 when using the default calibration set but with various numbers of data and sampling seeds.
As shown in~\cref{fig:calibration}, the changes in both data number and sampling seed in the calibration set incur no more than 1\% fluctuation in the final performance, demonstrating that \sysname is not sensitive to the calibration set.

\begin{figure}[t]
         \begin{subfigure}{0.235\textwidth}
            \centering
            \includegraphics[width=\linewidth]{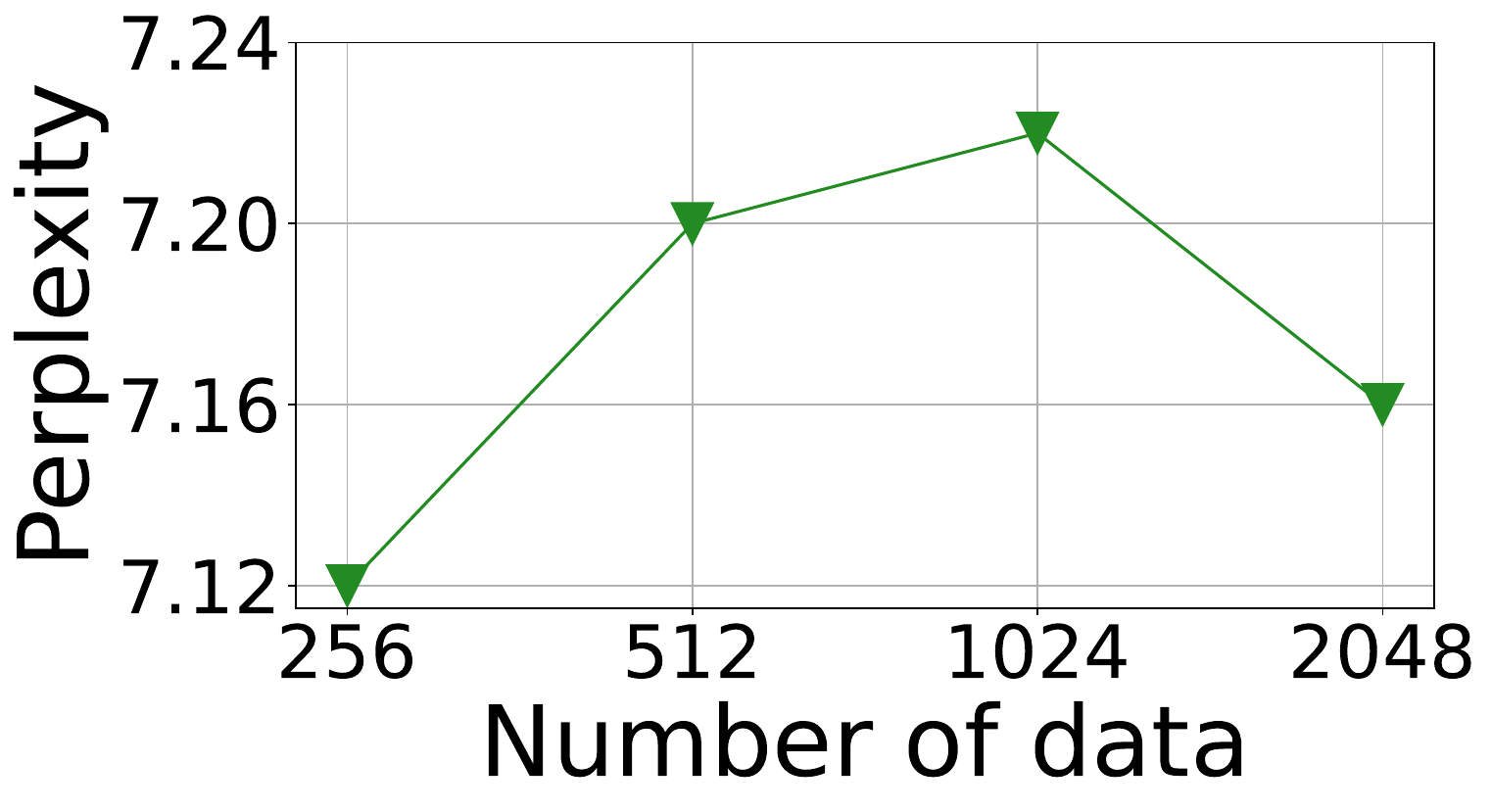}  % 插入图像
            \caption{Various Data Number}
            \label{fig:cali_number}
        \end{subfigure}
        \begin{subfigure}{0.24\textwidth}
            \centering
            \includegraphics[width=\linewidth]{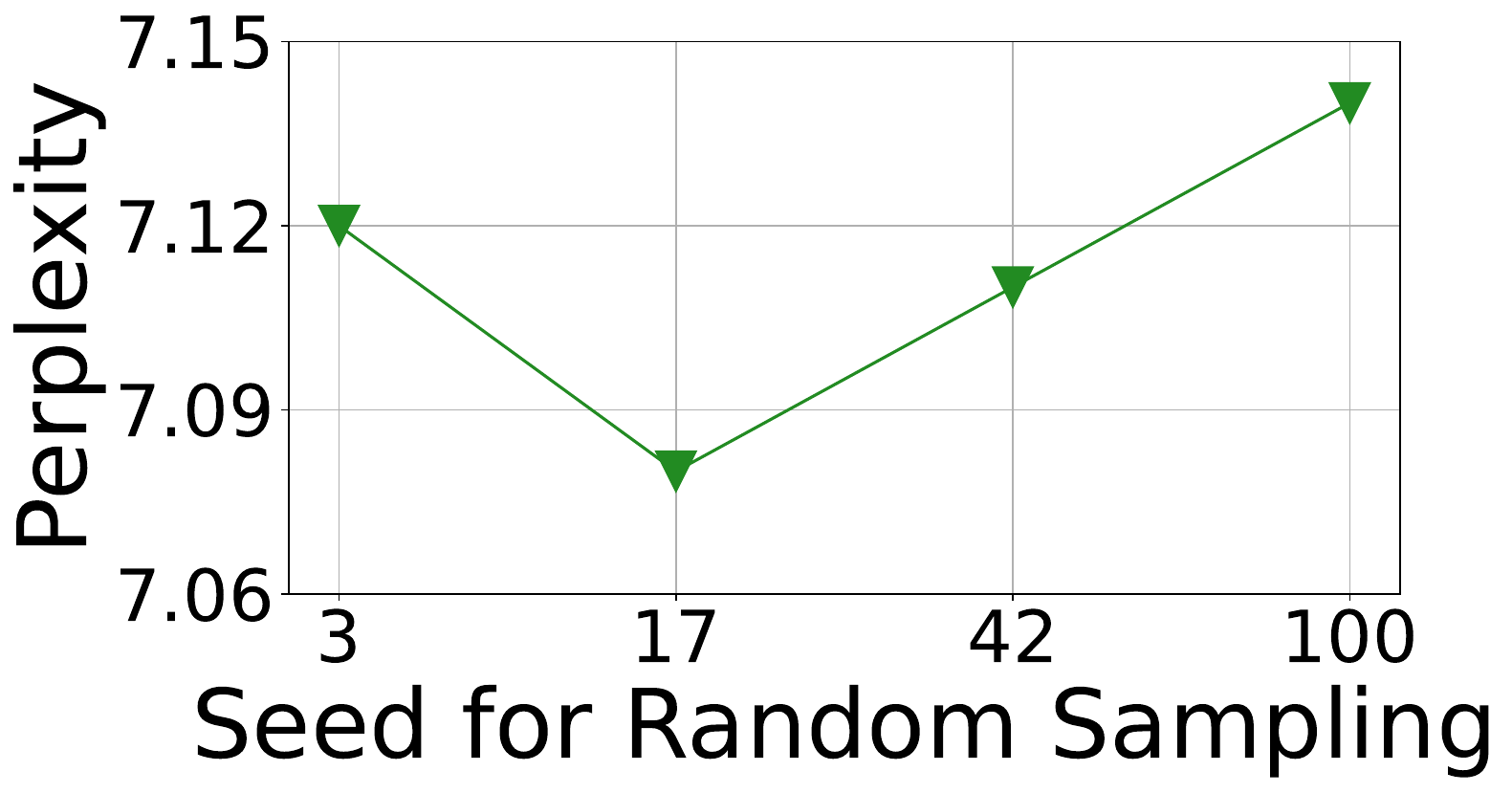}  % 插入图像
            \caption{Various Samping Seed}
            \label{fig:cali_seed}
        \end{subfigure}
\caption{Perplexity of LLaMA-7B under 20\% compression ratio using calibration data sampled with different numbers or seeds from WikiText-2.}
\label{fig:calibration}
\end{figure}

\subsection{Comparison with Other Types of Post-training LLM Compression Methods} 
\sysname is orthogonal to other post-training LLM compression methods, including quantization and pruning. In this experiment, we compare the performance of \sysname with state-of-the-art structured pruning-based and quantization-based LLM compression methods. 

\svdprunepplTable

% \vspace{-8mm}
\svdpruneaccTable

\noindent\textbf{Comparison with Structured Pruning.}
First, we compare \sysname with three state-of-the-art post-training structured pruning-based LLM compression methods: LLM-Pruner~\citep{DBLP:conf/nips/MaFW23}, SliceGPT~\citep{DBLP:conf/iclr/AshkboosCNHH24}, and BlockPruner~\citep{DBLP:journals/corr/abs-2406-10594} under various weight memory budgets, ranging from 10 GB to 7 GB. 
The perplexity results are shown in~\cref{tab:svd_prune_ppl} and the average accuracy results are shown in~\cref{tab:svd_prune_acc}. 
As shown, \sysname outperforms all three state-of-the-art structured pruning-based LLM compression methods. In particular, under 7 GB memory budget, \sysname achieves 28\% reduction in perplexity and 13\% higher average accuracy.

\vspace{1mm}
\noindent\textbf{Comparison with Quantization.} 
Next, we compare \sysname with post-training quantization-based LLM compression methods. 
We first compare \sysname with GPTQ~\citep{DBLP:journals/corr/abs-2210-17323} under 3-bit quantization. As shown in~\cref{tab:svd_quant_4bit}, while \sysname achieves worse perplexity compared to GPTQ under 3-bit memory budget, 
%it still remains highly useful for moderate compression ratios. 
%This is because 
combining \sysname (30\% compression ratio) with GPTQ-4-bit achieves superior perplexity compared to GPTQ-3-bit under the same memory budget. 
%Specifically, because SVD-based compression is orthogonal to and can be seamlessly combined with quantization-based methods, 
In other words, we find that under the same memory budget, by first compressing the original 16-bit LLM with \sysname at 30\% compression ratio, then quantizing the compressed LLM to 4-bit using GPTQ, we are able to achieve better perplexity compared to directly quantizing the original LLM to 3-bit.  
Finally, we compare \sysname with  two state-of-the-art post-training quantization-based LLM compression methods: BiLLM~\citep{DBLP:conf/icml/HuangLQLZ0M024} and PB-LLM~\citep{DBLP:conf/iclr/YuanSD24}, which push the frontier to 1-bit quantization.
%under 1-bit quantization. 
%
The results are shown in Table~\ref{tab:svd_quant}. 
We have two observations: (1) Without combining with quantization techniques, \sysname (16-bit) outperforms PB-LLM with 5\% lower perplexity.
%
%achieves a performance comparable to several 1-bit post-training quantization methods. In particular, it achieves 5\% lower perplexity than PB-LLM~\citep{DBLP:conf/iclr/YuanSD24}, highlighting the effectiveness of \sysname in compressing LLM to the extremely low memory budge similar to 1-bit quantization. 
%
(2) By combining with quantization techniques, \sysname (2-bit) outperforms state-of-the-art 1-bit post-training quantization method BiLLM. In particular, \sysname (2-bit) achieves 69\% lower perplexity than BiLLM, showing the promise of combining SVD-based and quantization-based compression methods for pushing the frontier of post-training LLM compression forward.
%a new research avenue that combines \sysname with recent quantization methods to further enhance compression performance.

\vspace{0mm}
\svdquanthighTable

\vspace{0mm}
\svdquantTable

\label{subsection:Comparison_with_other_compression_methods}
\vspace{-1mm}
\section{Conclusion}
\label{sec:conclusion}

\vspace{-2mm}
In this paper, we present \sysname, a SVD-based post-training LLM compression method. \sysname addresses the limitation of existing methods about high truncation loss during compression. Specifically, \sysname first employs a heterogeneous compression ratio allocation strategy to effectively balance truncation loss across different weight matrices of the LLM. It further introduces a loss-optimized weight truncation to ensure a lower and more stable truncation loss. 
% We evaluated \sysname on 10 datasets and five models at three scales. Our results demonstrate the superiority of \sysname over state-of-the-arts.
Our evaluation results demonstrate the superiority of \sysname over state-of-the-art SVD-based post-training LLM compression methods.

% Bibliography entries for the entire Anthology, followed by custom entries
%\bibliography{anthology,custom}
% Custom bibliography entries only
\vspace{-1mm}
\section{Limitations}
\vspace{-1mm}

While \sysname outperforms existing SVD-based LLM compression methods, there is still space for further improvement. For example, under 90\% compression ratio, there is a small performance gap compared with state-of-the-art quantization methods.
We aim to fill this gap in future.

\vspace{-1mm}
\section{Acknowledgement}
\vspace{-1mm}

This study is supported in part by NSF Award NeTS-2312675.

\newpage
\bibliography{Reference}
% \newpage
% \appendix
% \input{Sections/6_Appendix}
\end{document}